\documentclass[runningheads]{llncs}

\usepackage{eccv}

\usepackage{eccvabbrv}

\usepackage{graphicx}
\usepackage{booktabs}

\usepackage[accsupp]{axessibility}  

\usepackage{hyperref}

\usepackage{orcidlink}
\usepackage{dsfont}

\begin{document}

\title{Sparsity-Inducing Divergence Losses for Biometric Verification} 


\author{Dimitrios Koutsianos\inst{1,2} \and
Ladislav Mošner\inst{3,4} \and
Yannis Panagakis\inst{2,5} \and
Themos Stafylakis\inst{1,2,4}}

\authorrunning{D.~Koutsianos et al.}


\institute{Athens University of Economics and Business, Greece \and
Archimedes/Athena Research Center, Greece \and
Speech@FIT, Brno University of Technology, Czechia \and
Omilia, Greece \and
National and Kapodistrian University of Athens, Greece\\
\email{\{dkoutsianos, tstafylakis\}@aueb.gr}}

\maketitle

\begin{abstract}
    Performance in face and speaker verification is largely driven by margin-penalty softmax losses such as CosFace and ArcFace. Recently introduced $\alpha$-divergence loss functions offer a compelling alternative, particularly due to their ability to induce sparse solutions (when $\alpha>1$). However, standard geometric margins are designed for the softmax function and do not naturally extend to this generalized probabilistic framework. 
    In this paper we propose Q-Margin, a novel $\alpha$-divergence loss that introduces a principled probabilistic margin. Unlike conventional methods that apply geometric penalties to the logits (unnormalized log-likelihoods), Q-Margin encodes the margin penalty directly into the reference measure (prior probabilities). This formulation naturally encourages discriminative embeddings while preserving the beneficial sparsity properties of the $\alpha$-divergence. We demonstrate that Q-Margin achieves competitive or superior performance on the challenging IJB-B and IJB-C face verification benchmarks and similarly strong results in speaker verification on VoxCeleb. Crucially, against ArcFace and CosFace baselines trained under an identical recipe, Q-Margin consistently improves at low False Acceptance Rates (FARs), a capability critical for practical high-security applications. Finally, the extreme sparsity of the Q-Margin posteriors enables exact and memory-efficient training, offering a scalable solution for datasets with millions of identities. 
\end{abstract}    
\section{Introduction}
\label{sec:intro}

Margin-penalty softmax losses such as CosFace~\cite{cosface} and ArcFace~\cite{arcface} have become the de facto standard in modern face and speaker recognition. They achieve state-of-the-art accuracy by incorporating margin penalties into the logit of the ground truth identity, which explicitly maximizes inter-class separation. Such methods rely on the $\operatorname{softargmax}$ operator, which transforms logits into probabilities evaluated by the Cross-Entropy loss or equivalently, the Kullback–Leibler (KL) divergence with a uniform prior~\cite{alpha-loss}. To date, margin penalties have been applied exclusively through geometric modifications to logits (\eg, $\cos(\theta + m)$ in ArcFace, $\cos(\theta) - m$ in CosFace). While effective, these geometric margins are heuristic extensions of softmax that remain decoupled from its probabilistic formulation.

Recent work by Roulet et al.~\cite{alpha-loss} introduced a principled 
generalization of the standard Cross-Entropy loss through Fenchel–Young 
losses~\cite{fenchel-young-losses} derived from $f$-divergences~\cite{f-divergence}. This framework extends Cross-Entropy in two key ways: (1)~by replacing the KL divergence with a broader family of $f$-divergences, each defining a unique convex loss and corresponding 
probability mapping (the $\operatorname{softargmax}_f$), and (2)~by allowing non-uniform reference measures $\mathbf{q}$ to encode arbitrary prior probabilities. While Roulet et al.\ used non-uniform $\mathbf{q}$ to represent class frequency imbalance, no prior work has exploited this flexibility to encode margin penalties probabilistically.

Among the $f$-divergence family, the $\alpha$-divergence is 
particularly useful for biometric verification due to its tunable 
trade-off between smooth and sparse probability mappings. It defines a 
continuum of generalized entropy measures: recovering the Shannon 
negentropy (standard Cross-Entropy) as $\alpha \rightarrow 1$, the Gini 
negentropy (SparseMax~\cite{sparsemax}) at $\alpha = 2$, and exhibiting 
intermediate behaviors for values such as $\alpha = 1.5$. For $\alpha > 1$, the $\alpha$-$\operatorname{softargmax}$ suppresses small probabilities, producing sparse posteriors that emphasize confident predictions. In practice, this encourages more discriminative embeddings and sharper decision boundaries in face and speaker recognition.

In this paper, we propose \textbf{Q-Margin}, a method that introduces a \emph{probabilistic margin} by encoding the penalty directly in the reference measure $\mathbf{q}$. Instead of modifying the logits geometrically, we modify the prior for the ground truth class. By down-weighting the reference measure $q_y$ of the target class, we create a stricter learning objective that naturally enforces a margin. This formulation preserves the sparsity benefits of $\alpha$-divergence, allowing efficient computation of the normalizer of the posterior probabilities (which requires an iterative algorithm for $\alpha \neq 1$). In the limiting case $\alpha \to 1$, Q-Margin reduces to CosFace.

By reinterpreting margin learning through the lens of $\alpha$-divergence, we unify geometric and probabilistic perspectives within a single framework that is theoretically grounded. This bridges a fundamental gap between information-theoretic loss design and the empirical success of margin-penalty softmax.\\
\textbf{Our contributions are as follows:}
\begin{itemize}
    \item We propose $\alpha$-divergence losses with margin penalties 
    as an inherently-sparse alternative to standard logistic loss, for training highly discriminative identity embeddings.
    \item We derive Q-Margin, which introduces a principled probabilistic margin through non-uniform reference measures. We show that this formulation recovers the popular CosFace loss as $\alpha \rightarrow 1$, providing a theoretical link between probabilistic and geometric margins.
    \item We exploit the sparsity of the posteriors for $\alpha>1$ and propose a simple modification of the iterative bisection algorithm (used to estimate the normalizer of posterior probabilities), which substantially accelerates training in large-scale datasets.
    \item We comprehensively evaluate on WebFace42M, IJB-B, IJB-C, and VoxCeleb, showing that Q-Margin is competitive with state-of-the-art methods including AdaFace~\cite{adaface} and PartialFC~\cite{partialfc}, and is particularly strong at the low FARs that matter for high-security verification.
    \item We provide a detailed empirical analysis of hyperparameter sensitivity, specifically investigating the interplay between $\alpha$, scale ($s$) and margin ($m$) in different biometric modalities.
\end{itemize}
\section{Related Work}
\label{sec:related_work}

\subsection{Margin-Penalty Losses and Scalable Classifiers for Biometric Verification}

Early deep face recognition models utilized the standard Softmax loss \cite{VGGFace2}, but this approach was found to be insufficient for learning features with enough discriminative power for open-set recognition tasks. This limitation spurred the development of loss functions that explicitly engineer the feature space to increase inter-class variance while minimizing intra-class variance.

L-Softmax \cite{L-Softmax} was a pioneering effort that introduced an angular margin into the loss function, forcing decision boundaries to be more compact. SphereFace \cite{sphereface} further refined this by normalizing the final layer's weights and constraining features to a hypersphere with a multiplicative angular margin, though both methods proved challenging to train. The field subsequently shifted towards additive margins, which proved more stable and effective. CosFace \cite{cosface} proposed adding a margin directly in cosine space ($\cos(\theta) - m$), while ArcFace~\cite{arcface} added an angular margin to the angle itself ($\cos(\theta + m)$). ArcFace in particular has become a widely used baseline in both face and speaker verification~\cite{xiang2019_margin-matters}, with consistent adoption across VoxSRC challenges~\cite{huh2024_voxsrc-retrospective} and extensions such as large margin fine-tuning~\cite{Thienpondt21:lm-finetuning}.

A common assumption in these fixed-margin methods is that all identities and samples present the same degree of learning difficulty. Recent adaptive margin strategies relax this assumption. AdaFace~\cite{adaface} dynamically adjusts the margin per sample using the feature norm as a quality proxy. KappaFace~\cite{KappaFace} assigns class-level margins via von Mises-Fisher concentration estimates. ElasticFace~\cite{ElasticFace} replaces the fixed margin with a stochastic one drawn from a Gaussian distribution. While these methods improve flexibility, they remain coupled to the standard softmax Cross-Entropy and introduce additional components (quality estimators, distribution parameters) that are orthogonal to our contribution. Our approach instead generalizes the underlying divergence, and a probabilistic margin formulation like Q-Margin could in principle be combined with such adaptive schemes in future work.

On the scalability front, PartialFC~\cite{partialfc2, partialfc} addresses the computational bottleneck of the final fully-connected layer by sampling a subset of negative class centers each training step, reducing cost while maintaining accuracy. Our work offers a complementary perspective: the extreme posterior sparsity induced by $\alpha$-divergences ($\alpha > 1$) naturally concentrates computation on a small active set of classes, a similar efficiency gain that follows as a mathematical consequence of the loss rather than a sampling heuristic.

Despite their empirical success, all of the above methods apply margin penalties through \emph{geometric} modifications to the logits before passing them to the standard Cross-Entropy loss. This leaves unexplored the question of whether margins can be introduced \emph{within} a more general probabilistic loss framework, a gap our work aims to address.

\subsection{Generalizations of the Cross-Entropy Loss}

A separate line of research has focused on generalizing the Cross-Entropy loss from an information-theoretic standpoint. Roulet et al. \cite{alpha-loss} utilized Fenchel-Young losses \cite{fenchel-young-losses} to derive a broad family of convex losses from $f$-divergences, providing a unified perspective that recovers, among others, the SparseMax transformation \cite{sparsemax}, a sparse alternative to $\operatorname{softargmax}$ capable of producing distributions with exact zeros. Chan and Kittler \cite{angularsparsemax} combined SparseMax with ArcFace margins using MobileFaceNets~\cite{mobilefacenets}, achieving competitive results. Peters et al. further generalized SparseMax with EntMax~\cite{entmax}, creating a family of transformations with controllable sparsity parameterized by the $\alpha$-divergence. Other information-theoretic generalizations include R\'{e}nyi divergence objectives for variational inference \cite{renyi1961measures, li2016renyi} and $f$-divergence-based training criteria \cite{f-divergence-novello}, both demonstrating the flexibility of alternative divergences for learning.

Our work builds on the $\alpha$-divergence loss from this framework, but addresses a different question. Prior uses of these generalized losses have either kept the reference measure uniform or used it to model class frequency imbalance \cite{alpha-loss}. No existing work has exploited the reference measure to encode \emph{margin penalties}, the mechanism that has driven the empirical success of face and speaker verification systems. Q-Margin bridges these two research directions by introducing a principled probabilistic margin within the $\alpha$-divergence framework, recovering CosFace as the special case $\alpha \to 1$.
\section{Method}

\subsection{Notation}

Let $k$ be the number of identities in the training set. We denote $[k] = \{1,\dots,k\}$. We denote the probability simplex by $\Delta^{k-1} = \{ \textbf{p}\in\mathbb{R}^{k}_{+}: \langle\textbf{p},\textbf{1}\rangle = 1\}$. We also denote by $\boldsymbol{\theta} \in \mathbb{R}^k$ the output logits produced by a neural network, and $\textbf{y} \in \{e_1,\dots,e_k\}$ denotes the one-hot identity labels. 

Throughout this work, we adopted the convention proposed by Roulet et al.~\cite{alpha-loss}:\\
\noindent
\begin{minipage}{0.44\linewidth}
\begin{equation}
\operatorname{softmax}(\boldsymbol{\theta}) = \log\sum_{j'=1}^k \exp(\theta_{j'})
\end{equation}
\end{minipage}
\hfill
\begin{minipage}{0.52\linewidth}
\begin{equation}
[\operatorname{softargmax}(\boldsymbol{\theta})]_j = \frac{\exp(\theta_j)}{\sum_{j'=1}^k\exp(\theta_{j'})}
\end{equation}
\end{minipage}
This convention distinguishes the log-partition function, $\operatorname{softmax}(\boldsymbol{\theta})$, from its gradient, the standard probability-generating $\operatorname{softmax}$ function, $\operatorname{softargmax}(\boldsymbol{\theta})$, as shown in \cref{eq:softmax_grad}. The $\operatorname{softmax}$ is the log-sum-exp operator and $\operatorname{softargmax}$ is the standard $\operatorname{softmax}$ function.
\begin{equation}
    \nabla_{\boldsymbol{\theta}}\operatorname{softmax}(\boldsymbol{\theta}) = \operatorname{softargmax}(\boldsymbol{\theta})
    \label{eq:softmax_grad}
\end{equation}
This equation also holds for the corresponding $\operatorname{softmax}_f$ and $\operatorname{softargmax}_f$ functions discussed below~\cite{alpha-loss}.

\subsection{$\boldsymbol{f}$- and $\boldsymbol{\alpha}$-divergence Loss Framework}

Our methods build upon the generalized framework of Fenchel-Young losses~\cite{fenchel-young-losses}, specifically using the $f$-divergence loss, $l_f$, as proposed by Roulet et al.~\cite{alpha-loss}. 

This framework is derived from a fundamental optimization problem. The generalized operator, $\operatorname{softmax}_f(\boldsymbol{\theta})$ is defined as the solution to a regularized maximization problem:
\begin{equation}
    \operatorname{softmax}_f(\boldsymbol{\theta}) = \max_{\textbf{p}\in\Delta^{k-1}} \langle\textbf{p},\boldsymbol{\theta} \rangle - D_f(\textbf{p}:\textbf{q})
\end{equation}
Here, the $f$-divergence, $D_f(\textbf{p}:\textbf{q}) = \langle f(\textbf{p}/\textbf{q}), \textbf{q}\rangle$, where $f(\cdot)$ is a convex function, acts as a regularizer. This generalizes the standard $\operatorname{softmax}$ (log-sum-exp) which is recovered when the KL divergence, $D_{KL}(\textbf{p}:\textbf{q})$, is used. The loss function is then defined as:
\begin{equation}
    l_f(\boldsymbol{\theta},\textbf{y};\textbf{q}) = \operatorname{softmax}_f(\boldsymbol{\theta}) + D_f(\textbf{y}:\textbf{q}) - \langle\boldsymbol{\theta}, \textbf{y}\rangle
\end{equation}
where $\boldsymbol{\theta}\in\mathbb{R}^k$ are the logits, $\textbf{y}\in\Delta^{k-1}$ is the ground-truth (in our case a one-hot vector with $1$ at index $y$), $\textbf{q}\in\mathbb{R}^k_+$ is the reference measure (class priors). Note that the term $D_f(\textbf{y}:\textbf{q})$ is constant with respect to the logits $\boldsymbol{\theta}$ and therefore does not influence the gradients. 

The $\alpha$-divergence is a subclass of $f$-divergence, generated by the convex function:
\begin{equation}
    f(u) = \frac{(u^{\alpha} - 1) - \alpha(u-1)}{\alpha(\alpha - 1)}, \quad \alpha\neq 1
    \label{eq:f}
\end{equation}
This framework generalizes the standard Cross-Entropy loss (recovered as $\alpha\to1$ with $\textbf{q}=\textbf{1}$, yielding $f(u) = u\log u -(u-1)$) and allows for controllable sparsity for $\alpha>1$. We refer the reader to \cite{alpha-loss} for the full derivation.

\subsection{Margin-Penalty Losses}

The standard Cross-Entropy loss, while ensuring separability, proved insufficient to learn the highly discriminative features required for open-set identity recognition. Early approaches like L-Softmax \cite{L-Softmax} and SphereFace~\cite{sphereface} introduced angular margins but faced training issues regarding both numerical stability and convergence. Subsequent research shifted towards additive margins, which proved to be more stable and effective in practice. 

Two prominent examples dominated the field: CosFace~\cite{cosface} (additive cosine margin) and ArcFace~\cite{arcface} (additive angular margin). Both methods penalize the target logit to force the model to learn distinctly separated features, thereby increasing discriminative power. However, these successful approaches share a common mechanism: they directly modify the target logit geometrically before it is passed to the loss function. This geometric manipulation contrasts with Q-Margin, which achieves a similar discriminative effect probabilistically by modifying the reference measure $\textbf{q}$ within the $\alpha$-divergence framework.

\subsection{Q-Margin: Modifying the Reference Measure}

Building on the generalized $\alpha$-divergence loss framework, we propose \textbf{Q-Margin}, which introduces the margin penalty through the reference measure $\textbf{q}$. Instead of directly manipulating the scaled cosine similarities $\boldsymbol{\theta} = s\cdot \textbf{c}$ (where $\textbf{c}\in [-1,1]^k$ is the vector of cosine similarities between a sample's embedding and the $k$ class prototype vectors), we encode the margin penalty into $\textbf{q}$. For a sample with ground-truth label $y$, the reference measure is defined as:

\begin{equation}
\label{eq:prior}
    q_j = \exp{(-s\cdot m \, \delta_{y,j})}
\end{equation}
where $s$ is the scaling factor, $m$ is the margin hyperparameter and $\delta_{\cdot,\cdot}$ is the Kronecker delta function. By significantly down-weighting $q_y$, we create a stricter learning objective. Minimizing the $\alpha$-divergence loss $l_f(\boldsymbol{\theta},\textbf{y};\textbf{q})$ compels the model to produce a substantially higher target logit $\theta_y$ relative to non-target logits to compensate, implicitly creating the desired decision margin through probabilistic means rather than direct geometric logit modification. The final loss is
\begin{equation}
    L_{Q-M} = l_f(s \cdot \textbf{c}, \textbf{y}; \textbf{q}).
\end{equation}
It is easy to show that CosFace is a special case of Q-Margin as $\alpha \to 1$. The reference-measure margin in~\cref{eq:prior} is orthogonal to a geometric angular margin and the two can be combined, but in preliminary experiments on MS1MV3 and WebFace42M an added angular shift yielded no gain, so we retain the purely probabilistic formulation.

\subsubsection{Gradients and Posterior Sparsity} \label{sec:grads}

The gradients for the $\alpha$-divergence loss, $l_f$, are key to understanding the model's behavior. The gradient w.r.t. the $j$-th logit $\theta_j$ (assuming target identity $y$) is $\frac{\partial l_f}{\partial\theta_j}=[\operatorname{softargmax}_f(\boldsymbol{\theta})]_j - {\bf y}_j$. The $j$-th element of the $\alpha$-$\operatorname{softargmax}$, which we denote as the posterior probability $p_j$, is given by: 
\begin{equation}
    p_j = [\operatorname{softargmax}_f(\boldsymbol{\theta})]_j 
    = q_{j}[1+({\alpha-1})(\theta_j - \tau^*)]_{+}^{1/(\alpha-1)}
\label{eq:l_grad_theta}
\end{equation}
where $\tau^*$ is a scalar value, computed iteratively, that ensures the probabilities sum to one $\left(\sum_jp_j = 1\right)$. For $\alpha>1$, the $[\cdot]_+ = \max(0,\cdot)$ operator is active, meaning that if a logit $\theta_j$ falls below a certain threshold relative to $\tau^*$, the posterior probability $p_j$ becomes exactly 0, resulting in sparse posterior distributions.

It is instructive to note the relationship between this formulation and the standard $\operatorname{softargmax}$. As $\alpha\to1$, the $p_j$ term recovers the standard exponential function $\exp(x)\!=\!\lim_{n\to \infty}\left(1+\frac{x}{n}\right)^n$. By setting $x\!=\!\theta_j\!-\!\tau^*$ and $n\!=\!1/(\alpha-1)$, we see that $\lim_{\alpha \to 1}p_j = \exp{(\theta_j - \tau^* + \log q_j)}$. This recovers the form of the standard $\operatorname{softargmax}$ probability, making the $\alpha$-divergence loss a polynomial generalization of the standard Cross-Entropy loss, and the CosFace loss when $\textbf{q}$ is as in \cref{eq:prior}.

\subsubsection{Efficient Computation of $\boldsymbol{\alpha}$-$\boldsymbol{\operatorname{softargmax}}$}

The $\alpha$-$\operatorname{softargmax}$ (\cref{eq:l_grad_theta}) is computed by first finding the scalar $\tau^*$. This is the unique solution for $\tau$ in the root finding equation:
\begin{equation}
    \label{eq:tau}
    \sum_{j=1}^k q_jf_*'(\max\{\theta_j-\tau, f'(0)\}) = 1.
\end{equation}
Here, $f$ is the convex function generating the $\alpha$-divergence (\cref{eq:f}), $f'(u) = \frac{u^{\alpha-1}-1}{\alpha-1}$, $\alpha \neq 1$, the derivative of $f$, $f^*(v) = \frac{1}{\alpha}([1+(\alpha-1)v]^{\frac{\alpha}{\alpha-1}}_+ - 1)$, $\alpha\in(1,+\infty)$ is the convex conjugate of $f$, while $f_*'=(f^*)'$ is the derivative of the convex conjugate. The components $q_j$ are from the reference measure and $\theta_j$ are the input logits. 

Following Roulet et al.~\cite{alpha-loss}, this value $\tau^*$ is found efficiently with the iterative bisection algorithm. The search is bounded within the range $[\tau_{\min},\tau_{\max}]$ where for $t \in \operatorname{argmax}_{j \in [k]} \theta_j$ it is \ $\tau_{\min} = \theta_{t} - f'(1 / q_{t})$ and $\tau_{\max} = \theta_{t} - f'\left(1 / \sum\nolimits_{j} q_j\right)$.

A key advantage of the $\alpha$-divergence with $\alpha > 1$ is the extreme sparsity of the resulting posterior distribution. In our experiments, the vast majority of class probabilities are assigned \emph{exactly} zero mass (see~\Cref{sec:Computational-eff} for detailed statistics). We leverage this property to significantly reduce computational cost. Instead of computing the root-finding step over all $k$ identities (which can be in the millions for datasets like WebFace42M), we sort the logits (in half-precision) to retrieve the top-5\% of logits and compute $\tau^*$ solely on this active set. Since the remaining 95\% of logits would fall below the sparsity threshold regardless, this optimized procedure is exact, not approximate (\cref{prop:exact}), while substantially reducing the computational overhead of the iterative bisection.

\begin{proposition}[Exactness of top-$K$ truncation]
\label{prop:exact}
Fix $\alpha>1$ and let $\tau^\ast$ solve \cref{eq:tau} over all $k$ classes, with active support
$S=\{j: p_j>0\}=\{j:\theta_j>\tau^\ast-\tfrac{1}{\alpha-1}\}$. If $S\subseteq\mathcal{T}_K$, where
$\mathcal{T}_K$ indexes the $K$ largest logits, then solving \cref{eq:tau} over $\mathcal{T}_K$ alone
yields the same $\tau^\ast$ and the same posterior $\mathbf{p}$ as the full computation.
\end{proposition}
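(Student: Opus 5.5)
The plan is to write the left-hand side of \cref{eq:tau} as a function of $\tau$ and use strict monotonicity to get uniqueness of the root. Set
\[
F(\tau)=\sum_{j=1}^k q_j\, g(\theta_j-\tau),\qquad g(v)=f_*'(\max\{v,f'(0)\}).
\]
From the closed form of $f^*$, one has $f_*'(v)=[1+(\alpha-1)v]_+^{1/(\alpha-1)}$. Since $f'(0)=-\tfrac{1}{\alpha-1}$, this gives $g(v)=[1+(\alpha-1)v]_+^{1/(\alpha-1)}$. So each summand is exactly the unnormalized posterior term in \cref{eq:l_grad_theta}. Each summand is continuous and nonincreasing in $\tau$. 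It is strictly decreasing wherever $\theta_j-\tau>-\tfrac{1}{\alpha-1}$, because there the base is positive and the exponent $1/(\alpha-1)$ is positive. It vanishes identically for $\tau\ge\theta_j+\tfrac{1}{\alpha-1}$.

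Let $F_K$ denote the same sum restricted to $\mathcal{T}_K$. The key steps are as follows.

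First, evaluate $F_K$ at the full solution $\tau^\ast$. For every $j\notin\mathcal{T}_K$ we have $j\notin S$ by the hypothesis $S\subseteq\mathcal{T}_K$. Hence $\theta_j\le\tau^\ast-\tfrac{1}{\alpha-1}$, so that term is $0$. It follows that $F_K(\tau^\ast)=F(\tau^\ast)=1$, and $\tau^\ast$ solves the truncated equation.

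Second, show the truncated root is unique. On the region where $F_K>0$, at least one term with $j\in\mathcal{T}_K$ is active, and that term is strictly decreasing. Therefore $F_K$ is strictly decreasing on $\{\tau: F_K(\tau)>0\}$. A level set $\{F_K=1\}$ with $1>0$ lies in this region, so it contains at most one point, and that point is $\tau^\ast$. The positivity $q_j>0$ is guaranteed by \cref{eq:prior}. This is the same argument that gives uniqueness of $\tau^\ast$ in the full problem, which the paper already asserts.

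Third, compare the posteriors. For $j\in\mathcal{T}_K$, both computations give $p_j=q_j[1+(\alpha-1)(\theta_j-\tau^\ast)]_+^{1/(\alpha-1)}$ with the same $\tau^\ast$, so the values are identical. For $j\notin\mathcal{T}_K$, the full computation gives $p_j=0$ by the first step, and the truncated procedure assigns $0$ by construction. Hence the two vectors $\mathbf p$ coincide and both sum to one.

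One might also check that the bisection bracket $[\tau_{\min},\tau_{\max}]$ computed from $\mathcal{T}_K$ still contains $\tau^\ast$. This holds because $\operatorname{argmax}\theta\in\mathcal{T}_K$, so $\tau_{\min}$ is unchanged. It also holds because the bracket argument in \cite{alpha-loss} applies verbatim to the restricted sum with $\sum_{j\in\mathcal{T}_K}q_j$.

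The main obstacle is the uniqueness step, specifically the strict-monotonicity claim. $F_K$ is only nonincreasing globally, since it is flat at $0$ far to the right. I would therefore restrict attention to the positive region, as in the second step. I would also make the identification $f'(0)=-\tfrac{1}{\alpha-1}$ explicit, so that the clipping in \cref{eq:tau} matches the $[\cdot]_+$ and the support condition defining $S$.
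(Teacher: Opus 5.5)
Your proof is correct and follows the same route as the paper's sketch: every discarded logit is inactive at $\tau^\ast$, so $\tau^\ast$ also solves the truncated root equation, and the two posteriors coincide. The one addition is your uniqueness step, namely that $F_K$ is strictly decreasing on $\{\tau: F_K(\tau)>0\}$ because $q_j>0$. This makes explicit what the paper leaves implicit when it asserts that root-finding over $\mathcal{T}_K$ returns the same $\tau^\ast$.
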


\noindent\emph{Proof sketch.} From \cref{eq:l_grad_theta}, $p_j>0 \iff \theta_j>\tau^\ast-\tfrac{1}{\alpha-1}$,
so every class outside $S$ contributes $p_j=0$ and drops out of the sum in \cref{eq:tau}. If
$S\subseteq\mathcal{T}_K$, the discarded logits are all inactive, hence restricting the root-finding to
$\mathcal{T}_K$ leaves $\tau^\ast$ and $\mathbf{p}$ unchanged. $\hfill\square$

We fix $K=5$\% throughout (not tuned per run). This is deliberately conservative: the active support never exceeded $\sim$0.4\% in any of our experiments, so the wide margin guarantees that $S \subseteq T_K$ holds comfortably rather than relying on a tight threshold. As an additional safeguard we verify $S \subseteq T_K$ at runtime by checking that the smallest retained logit is itself inactive. The bound was never violated in any of our experiments.
\section{Face Recognition Experiments}

\subsection{Training and Testing Data}

Our models were trained on WebFace42M~\cite{webface260m}, a cleaned subset of WebFace260M containing 42M images of 2M identities. Following standard face-recognition protocols, all images were aligned and resized to $112 \times 112$ pixels.
During training, several standard verification benchmarks served as development sets: LFW~\cite{lfw}, CFP-FP~\cite{cfp-fp}, AgeDB-30~\cite{agedb}, CALFW~\cite{ca-lfw} and CPLFW~\cite{cp-lfw}, for which we report True Acceptance Rate (TAR) at a False Acceptance Rate (FAR) of $10^{-3}$.

The final evaluation was performed on the more challenging IJB-B \cite{ijb-b} and IJB-C \cite{ijb-c} benchmarks. These datasets are designed to test recognition performance in unconstrained scenarios. On these two test sets, we report TAR at FAR levels of $10^{-4}$ and $10^{-5}$ as our primary performance metrics, which represent standard operating points for evaluating these benchmarks.

Due to the significant computational cost of training on WebFace42M, most configurations were trained a single time; we additionally repeated our reference configuration ($\alpha=1.25$, $s=32$, $m=0.2$) across three random seeds (\Cref{tab:multiseed}) to verify stability. We further mitigate the single-run limitation by reporting a broad hyperparameter sweep in \Cref{tab:results-face} and by validating trends across two distinct biometric domains.

\subsection{Implementation Details}
\setcounter{footnote}{0}
The experimental framework was built upon the InsightFace toolkit\footnote{\url{https://github.com/deepinsight/insightface} (accessed June 1, 2025)}, specifically its ArcFace implementation, with minimal modifications to integrate the custom $\alpha$-divergence loss. Experiments were conducted on two equivalent setups: AWS (8$\times$NVIDIA A100 40GB GPUs) and LUMI (4$\times$AMD MI250X GPUs, each exposing two GCDs for 8 logical devices).

We evaluated a standard backbone architecture from the ResNet family \cite{resnet}, the ResNet-100 (R-100) backbone. All models were trained for 20 epochs using Stochastic Gradient Descent (SGD) as an optimizer. We employed a step-wise learning rate decay schedule, starting at 0.1 for the first 7 epochs, decreasing to 0.01 for the next 6 epochs, and concluding at 0.001 for the final 7 epochs. A momentum of 0.9 and a weight decay of $5\times10^{-4}$ were used. The batch size was set to 128 per device (GCD), resulting in an effective total batch size of 1024.

The baseline losses (ArcFace, CosFace) use their canonical published settings for each modality.
For Q-Margin we grid-searched $\alpha$, $s$, and $m$; the grids differ by modality because the optimal
scale is coupled to $\alpha$ (discussed below), so face configurations explore larger $s$
than speaker ones. We report every configuration evaluated (Table~\ref{tab:results-face} for face; Table~\ref{tab:results-r34-vox2dev} for speaker), perform no held-out model selection, and evaluate the final-epoch checkpoint once on IJB-B/C. Baselines received comparable search effort over their standard $(s,m)$ ranges.

\subsection{Results}
\label{sec:results}

We present a comprehensive evaluation of Q-Margin against the industry-standard margin-penalty softmax losses ArcFace~\cite{arcface} and CosFace~\cite{cosface}, alongside recent state-of-the-art methods including AdaFace~\cite{adaface}, PartialFC~\cite{partialfc2}, UniFace~\cite{uniface}, and TopoFR~\cite{NEURIPS2024_419b6c97}.

As shown in~\Cref{tab:sota_comparison}, Q-Margin is the strongest method in the controlled (re-implemented) block, which is the only setting where loss-function differences are isolated from training data and backbone. Our best configuration ($\alpha=1.25$, $s=35$, $m=0.2$) reaches 93.67\% on IJB-B and 96.35\% on IJB-C at FAR=$10^{-5}$, improving over the ArcFace and CosFace baselines trained identically.

Against the reported numbers, Q-Margin is competitive rather than uniformly ahead. It essentially ties PartialFC on IJB-C (97.78\% vs.\ 97.82\% at $10^{-4}$; 96.35\% vs.\ 96.46\% at $10^{-5}$), while methods such as UniFace and TopoFR report higher IJB-C numbers using larger backbones (R-200) and additional architectural components. These gains are therefore not attributable to the loss alone, whereas Q-Margin obtains its results purely through a change in the objective, with no auxiliary quality network, negative sampling, or deeper backbone.

Notably, Q-Margin's near-parity with PartialFC is achieved while activating only $\sim$0.4\% of classes per example, suggesting the sparsity mechanism may play a role analogous to hard-negative mining.

\begin{table*}[h!]
    \centering
    \caption{\textbf{Comparison with state-of-the-art methods on IJB-B and IJB-C.} TAR (\%) at FAR=$10^{-4}$ and $10^{-5}$. \emph{Reported} numbers are taken from the original papers and differ in training corpus and/or backbone, so they serve as literature context rather than controlled comparisons. \emph{Re-implemented} methods share an identical recipe (ResNet-100, WebFace42M), isolating loss-function differences; best in bold. EntMax and SparseMax also use an ArcFace-style angular margin. $^{\dagger}$Backbone depth not specified in the original paper.}
    \label{tab:sota_comparison}
    \resizebox{\textwidth}{!}{
    \begin{tabular}{l @{\hskip 1em} c @{\hskip 1em} c @{\hskip 1em} c @{\hskip 1em} c@{\hskip 0.5em}c @{\hskip 1em}c@{\hskip 0.5em}c}
        \toprule
         & & & & \multicolumn{2}{c}{\textbf{IJB-B}} & \multicolumn{2}{c}{\textbf{IJB-C}} \\
         \cmidrule(lr){5-6} \cmidrule(lr){7-8}
         Method & Train Data & Backbone & Venue & $10^{-4}$ & $10^{-5}$ & $10^{-4}$ & $10^{-5}$ \\
        \midrule
        \multicolumn{8}{l}{\textit{Reported}} \\
        AdaFace~\cite{adaface}            & WF4M  & R-100 & CVPR22  & 96.03 & -     & 97.39 & -     \\
        PartialFC (0.3)~\cite{partialfc2} & WF42M & R-100 & CVPR22 & 96.47 & - & 97.82 & 96.46 \\
        CAFace + AdaFace~\cite{caface}    & WF4M  & R-100 & NeurIPS22 & 95.53 & 92.29 & 97.30 & 95.96 \\
        F$^2$C~\cite{ffc}                 & WF42M & R-100 & CVPR22  & -     & -     & 97.31 & -     \\
        QAFace~\cite{qaface}              & MS1MV2 & ResNet$^{\dagger}$ & WACV23  & 95.67 & -     & 97.20 & -     \\
        CurricularFace~\cite{huang2020curricularface} & MS1MV2  & R-100 & CVPR20  & 94.80  & -  & 96.10  & -  \\
        MagFace~\cite{Meng_2021_CVPR}     & MS1MV2 & R-100 & CVPR21  & 94.51  & 90.36  & 95.97  & 94.08  \\
        ElasticFace~\cite{ElasticFace} & MS1MV2 & R-100 & CVPR22 & 95.43  & -  & 96.65  & -  \\
        UniFace~\cite{uniface}            & WF42M & R-200 & ICCV23  & -  & -  & 97.91  & 96.68  \\
        TopoFR~\cite{NEURIPS2024_419b6c97}              & WF42M & R-200 & NeurIPS24 & - & -  & 98.01  & 97.10  \\
        \midrule
        \multicolumn{8}{l}{\textit{Re-Implemented}} \\
        ArcFace ($s\!=\!64$, $m\!=\!0.5$)              & WF42M & R-100 & - & 96.09 & 92.26 & 97.54 & 95.70 \\
        CosFace ($s\!=\!64$, $m\!=\!0.5$)              & WF42M & R-100 & - & 96.05 & 93.15 & 97.49 & 95.82 \\
        EntMax ($\alpha\!=\!1.25$, $s\!=\!64$, $m\!=\!0.5$)& WF42M & R-100 & - & 96.13 & 93.06 & 97.55 & 96.11 \\
        SparseMax ($\alpha\!=\!2$, $s\!=\!1.9$, $m\!=\!0.2$)& WF42M & R-100 & - & 95.50 & 91.84 & 97.12 & 95.28 \\
        \textbf{Q-Margin} ($\alpha\!=\!1.25$, $s\!=\!35$, $m\!=\!0.2$) (\textbf{Ours}) & WF42M & R-100 & - & \textbf{96.44} & \textbf{93.67} & \textbf{97.78} & \textbf{96.35} \\
        \bottomrule
    \end{tabular}}
\end{table*}

The Detection Error Tradeoff (DET) curves in \Cref{fig:det-curve} provide a visual corroboration of these results. Our best Q-Margin configuration (blue line) is shown to consistently outperform the ArcFace (pink) and CosFace (red) baselines. 

Beyond quantitative metrics, \Cref{fig:clean_grid} provides qualitative evidence of Q-Margin's improved robustness. The examples show our method succeeding where ArcFace fails, correctly verifying genuine pairs despite large intra-class variations (\eg extreme pose, aging) and correctly rejecting impostor pairs. This suggests Q-Margin learns a more robust and discriminative embedding space.

\paragraph{Hyperparameter Sensitivity}

Standard margin-penalty softmax losses, like ArcFace, typically employ a large margin penalty ($m=0.5$) combined with a high scaling factor ($s=64$). Our experiments indicate that this configuration is suboptimal for Q-Margin. As shown in \Cref{tab:results-face}, fixing $\alpha=1.25$ and $s=32$ while increasing $m$ from $0.2$ to $0.5$ results in a performance drop on both IJB-B and IJB-C. This suggests that the $\alpha$-divergence loss formulation is sensitive to aggressive probabilistic-margin penalties. We find that a moderate margin ($m=0.2$) yields the most robust results. To check that these gains reflect a stronger objective rather than seed variance, we repeated the $\alpha=1.25$, $s=32$, $m=0.2$ configuration across three seeds (\Cref{tab:multiseed}); the maximum standard deviation across IJB-B/C metrics is $0.07$ and the ordering over the ArcFace and CosFace baselines holds in every seed.

Furthermore, we observe a distinct coupling between the divergence parameter $\alpha$ and the scaling
factor $s$. Fixing $m=0.2$, the optimal scale depends strongly on $\alpha$: for $\alpha=1.25$ performance
improves steadily as $s$ increases, whereas for $\alpha=1.5$ it stagnates or degrades at larger scales
(\cref{tab:results-face}). This coupling admits a simple mechanistic explanation: increasing $\alpha$ makes the
$\operatorname{softargmax}_f$ mapping peakier at a fixed scale, and raising $s$ has the same sharpening
effect, so the two trade off along a single axis and the optimal scale \emph{decreases} as $\alpha$ grows
rather than varying arbitrarily. Accordingly, $\alpha$ and $s$ cannot be tuned independently, as increasing
both at once destabilizes the optimization. Therefore we adopt $\alpha=1.25$, $s=35$, $m=0.2$ as the effective
operating point on WebFace42M, achieving the best results while maintaining training stability.

\begin{table*}[h!]
    \centering
    \caption{Results for the WebFace42M training dataset. For all configurations we report the single run TAR.}
    \label{tab:results-face}
    \resizebox{\textwidth}{!}{%
        \begin{tabular}{l @{\hskip 0.5em} ccc @{\hskip 1em} ccccc @{\hskip 1em} c@{\hskip 0.5em}c @{\hskip 1em} c@{\hskip 0.5em}c}
            \toprule
            & \multicolumn{3}{c}{Parameters} & \multicolumn{5}{c}{Benchmarks} & \multicolumn{2}{c}{IJB-B} & \multicolumn{2}{c}{IJB-C}\\
             \cmidrule(lr){2-4} \cmidrule(lr){5-9} \cmidrule(lr){10-11} \cmidrule(lr){12-13} 
            Loss & $\alpha$ & $s$ & $m$ & LFW & CFP-FP & AgeDB-30 & CALFW & CPLFW & $10^{-4}$ & $10^{-5}$ & $10^{-4}$ & $10^{-5}$\\
            \midrule
ArcFace &  - & 32 & 0.5 & \bf{99.88} & 99.36 & 98.35 & 96.15 & \bf{94.90} & 95.75 & 91.69 & 97.42 & 95.36 \\
ArcFace &  - & 64 & 0.5 & 99.85 & 99.36 & 98.43 & 96.18 & 94.73 & 96.09 & 92.26 & 97.54 & 95.70 \\
CosFace &  - & 32 & 0.5 & 99.87 & 99.24 & 98.22 & 96.03 & 94.77 & 95.81 & 92.50 & 97.41 & 95.52 \\
CosFace &  - & 64 & 0.5 & 99.82 & 99.20 & 98.28 & 96.08 & 94.52 & 96.05 & 93.15 & 97.49 & 95.82 \\
Q-Margin &  1.25 & 10 & 0.2 & 99.87 & \bf{99.39} & 98.42 & \bf{96.25} & 94.87 & 96.08 & 92.95 & 97.62 & 95.91 \\
Q-Margin &  1.25 & 10 & 0.5 & 99.87 & 99.36 & 98.33 & 96.15 & 94.80 & 96.03 & 91.79 & 97.55 & 95.72 \\
Q-Margin &  1.25 & 20 & 0.2 & 99.85 & 99.34 & 98.45 & 96.22 & \bf{94.90} & 96.21 & 92.95 & 97.58 & 95.91 \\
Q-Margin &  1.25 & 20 & 0.5 & 99.83 & 99.36 & 98.30 & 96.17 & 94.80 & 96.02 & 92.45 & 97.57 & 95.92 \\
Q-Margin &  1.25 & 32 & 0.2 & 99.87 & 99.36 & \bf{98.57} & 96.15 & 94.83 & 96.30 & 93.38 & 97.76 & \textbf{96.35} \\
Q-Margin &  1.25 & 32 & 0.5 & 99.85 & 99.34 & 98.33 & 96.13 & 94.68 & 96.05 & 92.67 & 97.51 & 96.00 \\
Q-Margin & 1.25 & 35 & 0.2 & 99.83 & \textbf{99.39} & \textbf{98.57} & 96.08 & 94.78 & \textbf{96.44} & \textbf{93.67} & \textbf{97.78} & \textbf{96.35} \\
Q-Margin &  1.5 & 10 & 0.2 & 99.85 & 99.33 & 98.28 & 96.22 & 94.65 & 95.86 & 92.26 & 97.48 & 95.64 \\
Q-Margin &  1.5 & 10 & 0.5 & 99.87 & 99.29 & 98.15 & 96.08 & 94.42 & 95.56 & 91.65 & 97.07 & 95.04 \\
Q-Margin &  1.5 & 20 & 0.2 & 99.83 & 99.30 & 98.00 & 96.15 & 94.42 & 95.71 & 91.85 & 97.21 & 95.59 \\
Q-Margin &  1.5 & 20 & 0.5 & 99.83 & 99.29 & 98.27 & 96.17 & 94.58 & 95.86 & 92.94 & 97.38 & 95.92 \\
Q-Margin &  1.5 & 32 & 0.2 & 99.85 & 99.29 & 98.30 & 96.10 & 94.67 & 96.04 & 92.15 & 97.49 & 95.93 \\
Q-Margin & 1.5 & 32 & 0.5 & 99.85 & 99.37 & 98.35 & 96.12 & 94.70 & 95.99 & 92.27 & 97.43 & 95.76 \\
\bottomrule
        \end{tabular}%
    }
\end{table*}

As a practical guideline we recommend starting with $\alpha=1.25$ and a moderate margin ($m=0.2$), then tuning $s$ upward, since performance is more sensitive to margin than scale across almost all configurations we evaluated. Higher $\alpha$ (e.g.\ $\alpha=1.5$) gave diminishing returns at this scale but proved beneficial in the smaller speaker verification setting~(\Cref{sec:speaker}).

\subsection{Computational Efficiency}
\label{sec:Computational-eff}

\begin{table*}[b!]
    \centering
    \caption{\textbf{Multi-seed results for Q-Margin} ($\alpha\!=\!1.25$, $s\!=\!32$, $m\!=\!0.2$). Mean\,$\pm$\,standard deviation over three runs with distinct random seeds. The maximum standard deviation across IJB-B/C metrics is $0.07$, confirming the reported gains are stable across seeds.}
    \label{tab:multiseed}
    \resizebox{\textwidth}{!}{%
        \begin{tabular}{ccccc @{\hskip 1em} c@{\hskip 0.5em}c @{\hskip 1em} c@{\hskip 0.5em}c}
            \toprule
            \multicolumn{5}{c}{Benchmarks} & \multicolumn{2}{c}{IJB-B} & \multicolumn{2}{c}{IJB-C}\\
            \cmidrule(lr){1-5} \cmidrule(lr){6-7} \cmidrule(lr){8-9} 
            LFW & CFP-FP & AgeDB-30 & CALFW & CPLFW & $10^{-4}$ & $10^{-5}$ & $10^{-4}$ & $10^{-5}$\\
            \midrule
$99.83\!\pm\!0.03$ & $99.35\!\pm\!0.02$ & $98.51\!\pm\!0.06$ & $96.13\!\pm\!0.02$ & $94.87\!\pm\!0.04$ & $96.38\!\pm\!0.07$ & $93.43\!\pm\!0.05$ & $97.77\!\pm\!0.01$ & $96.34\!\pm\!0.02$\\
        \bottomrule
        \end{tabular}%
    }
\end{table*}

A potential concern with the $\alpha$-divergence framework is the computational overhead of the $\alpha$-$\operatorname{softargmax}$. Unlike the standard Softmax used in ArcFace, which benefits from highly optimized closed-form GPU implementations, our formulation requires an iterative bisection algorithm to find the root $\tau^*$. On extreme-scale datasets like WebFace42M, which contains over two million distinct identity classes, evaluating this root-finding algorithm across all logits simultaneously can introduce a noticeable computational bottleneck.

\begin{figure*}[h!]
    \centering
    \includegraphics[width=\textwidth]{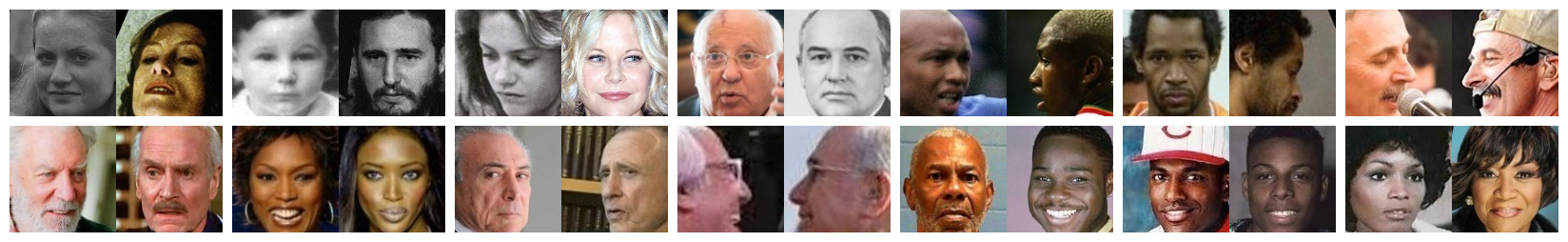}
    \caption{Challenging pairs from AgeDB-30, CALFW, and CPLFW where our proposed method (Q-Margin, $\alpha$=1.25, s=32, m=0.2) succeeds, while the ArcFace baseline fails. The top row shows genuine pairs correctly accepted by our model and falsely rejected by the baseline. The bottom row shows impostor pairs correctly rejected by our model and falsely accepted by the baseline.}
    \label{fig:clean_grid}
\end{figure*}

To quantify this, we benchmarked the training throughput (samples per second) on the WebFace42M dataset using the ResNet-100 backbone. As shown in \Cref{tab:throughput}, the naive implementation of Q-Margin evaluated over all 2M+ identities (Q-M 100\%) yields a throughput of 1466.45 samples/s, representing a roughly 27\% slowdown compared to the ArcFace baseline (2005.13 samples/s).

However, we mitigate this overhead by explicitly leveraging the extreme sparsity of the $\alpha$-divergence posterior ($\alpha > 1$). In practice, across all training examples on WebFace42M, the highest number of non-zero elements in $\textbf{p}$ was $7{,}976$ out of over 2M classes ($\sim0.4\%$), with an average of only 1,020 per example. We can therefore perform a partial sort to isolate the top-$K\%$ of logits and compute $\tau^*$ solely on this active set. Since this maximum is an order of magnitude below $K=5\%$, the active support is contained in the top-$K$ set, so the truncation is exact by~Proposition~\ref{prop:exact} rather than an approximation.


As demonstrated in~\Cref{tab:throughput}, restricting the bisection to the top 5\% of logits (Q-Margin 5\%) yields 1900.26 samples/s, a $\sim$5\% overhead versus ArcFace. We retain this $K\!=\!5$\% even though the observed active set never exceeds $\sim$0.4\%: the generous margin guarantees $S \subseteq T_K$ without relying on a tight bound that a rare example might exceed. Pushing to the top 1\% (Q-Margin 1\%) recovers throughput almost entirely (1978.07 samples/s, $\sim$1.3\% difference), so even this conservative choice scales to datasets with millions of identities.

\begin{table*}[h!]
    \centering
    \caption{Average throughput on WebFace42M ($>2$M identities) with ResNet-100. Q-Margin (1\%) means that we sort and keep only the top 1\% of the logits.}
    \begin{tabular}{c @{\hskip 1em} c @{\hskip 1em} c @{\hskip 1em} c}
    \toprule
        ArcFace & Q-Margin (1\%) & Q-Margin (5\%) & Q-Margin (100\%) \\
        \midrule
       2005.13  & 1978.07 & 1900.26  & 1466.45\\
       \bottomrule
    \end{tabular}
    \label{tab:throughput}
\end{table*}

\section{Speaker Verification Experiments}
\label{sec:speaker}
Beyond face verification, margin-penalty softmax losses are equally prevalent in speaker verification~\cite{xiang2019_margin-matters}, where speaker embeddings are increasingly used alongside visual features in audio-visual tasks such as speaker diarization~\cite{wuerkaixi2022dyvise}. We evaluate Q-Margin in this domain to test whether the benefits of probabilistic margins transfer across biometric modalities, particularly in a regime with far fewer training identities (5,994 vs.\ 2M).

\subsection{Training and Testing Data}

For speaker verification, we adopted the standard protocol established by the VoxCeleb benchmarks. The training corpus is the development set of VoxCeleb2~\cite{Voxceleb2}, comprising about 1M utterances from 5,994 speakers.

For evaluation, we adopted the challenging trial list termed VoxCeleb1-H (hard). It contains 550,894 trials featuring 1,190 speakers. The difficulty of the evaluation set arises from the constraint that trials consist of utterances from speakers of the same gender and nationality. While other trial lists (VoxCeleb1-O and VoxCeleb1-E) exist, we focus on this \emph{hard} set to maintain alignment with the challenging scenarios evaluated in the face recognition experiments. While experiments on the other trial lists yielded similar conclusions, we omit those results for brevity. Our analysis focuses on TAR(\%) at FAR of $10^{-3}$ and $10^{-4}$.

\begin{figure}[tb]
  \centering
  \begin{subfigure}{0.46\linewidth}
    \includegraphics[width=\linewidth]{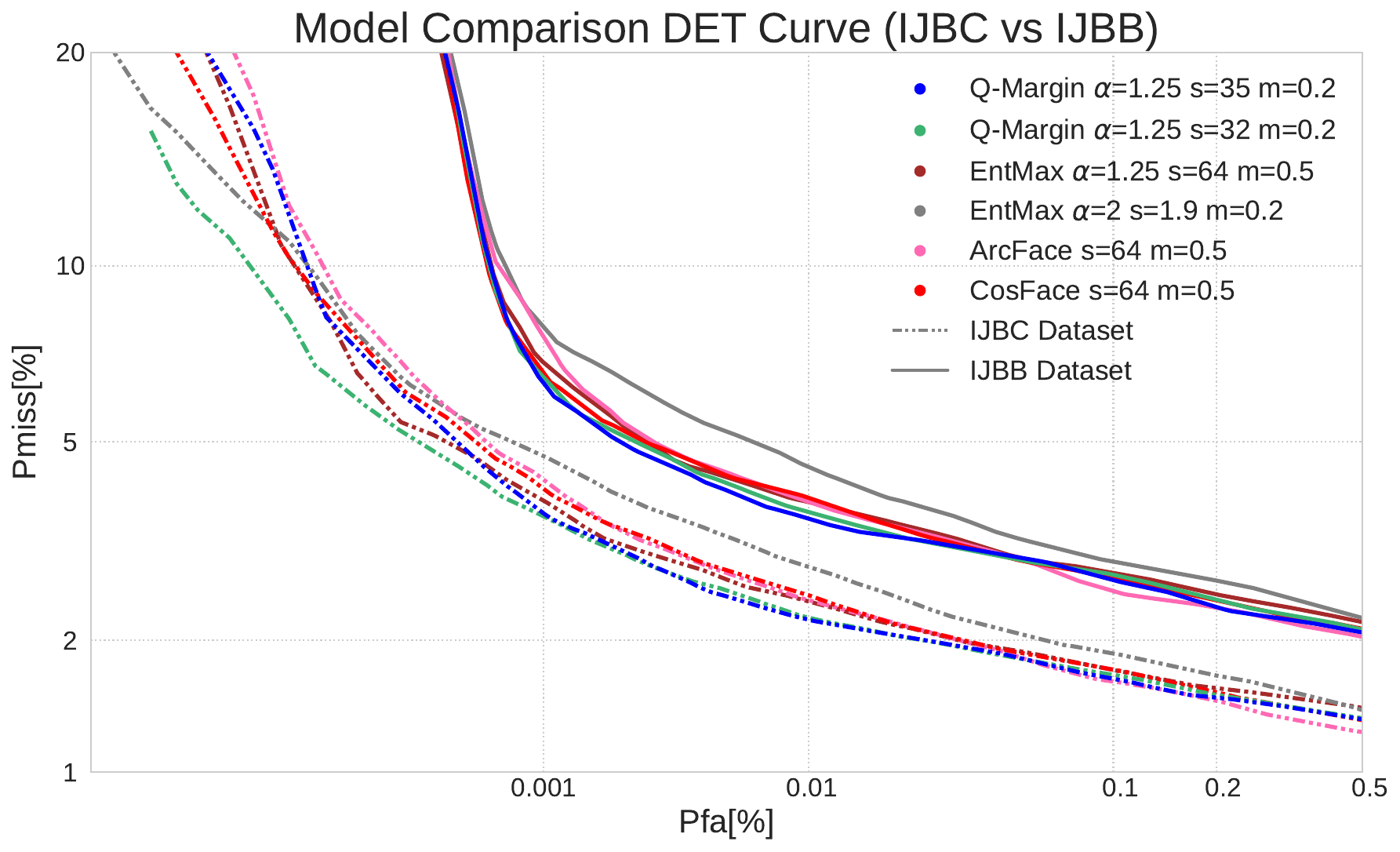}
    \caption{DET Curves for face recognition models.}
    \label{fig:det-curve}
  \end{subfigure}
  \hfill
  \begin{subfigure}{0.5\linewidth}
    \includegraphics[width=\linewidth]{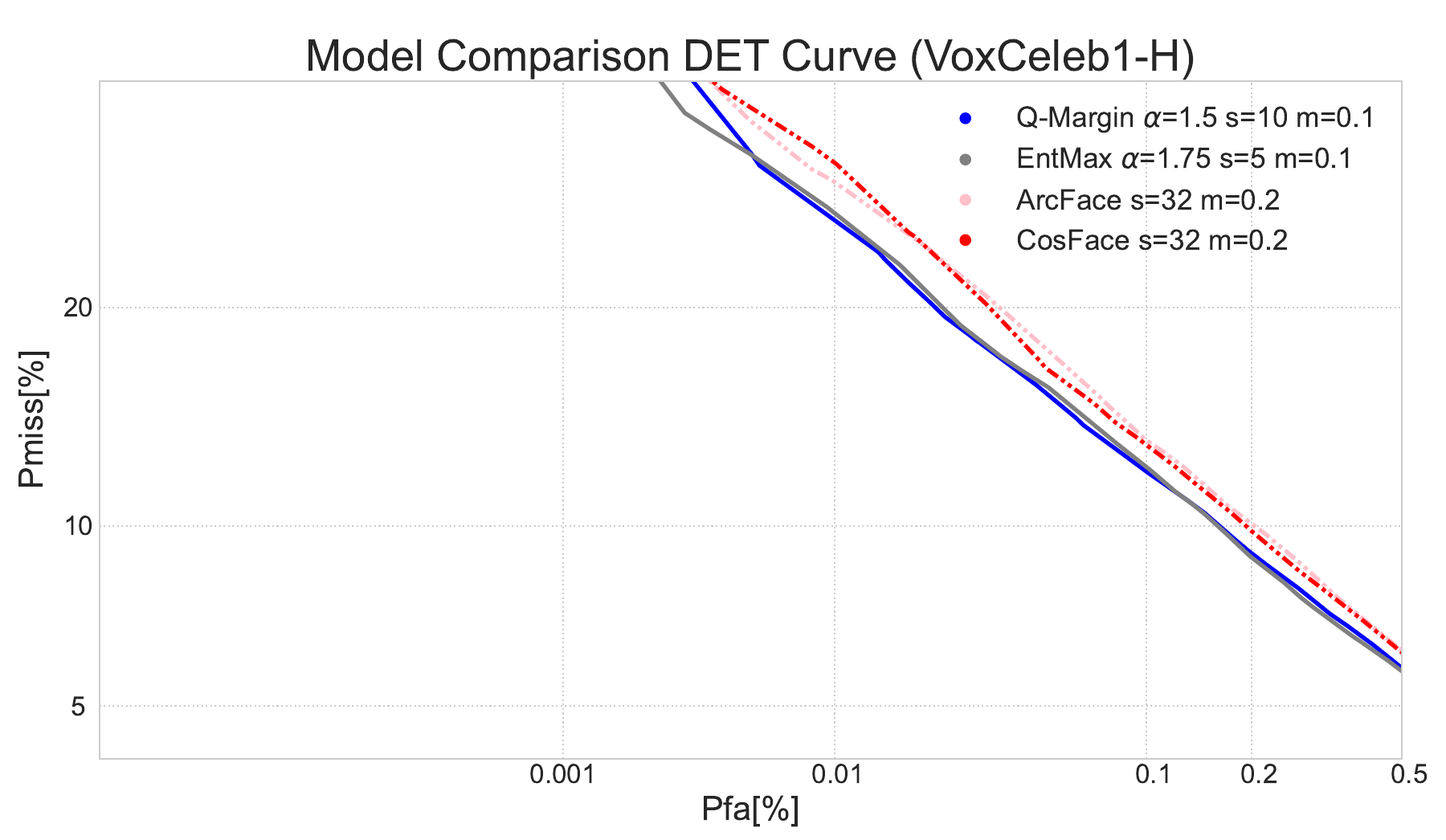}
    \caption{DET Curves for speaker recognition models.}
    \label{fig:det-curve-speaker}
  \end{subfigure}
  \caption{False acceptance vs missed detection probabilities for various losses on IJB-B and IJB-C (left) and VoxCeleb1-H (right), focused on low FARs.}
  \label{fig:short}
\end{figure}

\subsection{Implementation Details}

Our implementation is based on the WeSpeaker toolkit\footnote{\url{https://github.com/wenet-e2e/wespeaker}, (accessed September 30, 2025)}~\cite{wang2024advancing}. Training was conducted on 4 NVIDIA RTX A4000 (16GB) GPUs. A ResNet-34 (R-34) backbone was employed as the feature extraction network. The model was trained for 150 epochs using SGD with a momentum of 0.9 and a weight decay of $10^{-4}$. The learning rate followed a two-phase schedule: a linear warm-up during the first six epochs, after which it decreased exponentially from 0.1 to $5 \cdot 10^{-5}$ over the remaining training period. Following common practice in speaker verification, the margin parameter $m$ was annealed, starting from 0.0 and increasing exponentially between epochs 20 and 40 to its target value; this value was subsequently held constant. A batch size of 128 per GPU was used. To improve robustness, input data was augmented with additive noise and reverberation during training.

\subsection{Results}

Consistent with the face recognition evaluation, we compare our methods with the CosFace and ArcFace baselines. \Cref{tab:results-r34-vox2dev} reports TAR at different FAR values on VoxCeleb1-H, demonstrating the strong performance of our methods and their sensitivity to hyperparameters ($\alpha$, $s$, $m$). For a given $\alpha$, optimal $s$ and $m$ were found by empirical search over $s \in \{5, 10, 20, 32\}$ and $m \in \{0.1, 0.2, 0.3\}$.

\begin{table}[t]
    \centering
    \caption{Results with a ResNet-34 (R-34) model trained on VoxCeleb2 dev. We report TAR (\%) at FAR of $10^{-3}$ and $10^{-4}$. The experiments are performed using VoxCeleb1-H as test set.}
    \label{tab:results-r34-vox2dev} 
    
    \begin{tabular}{l @{\hskip 1em} ccc @{\hskip 1em} cc} 
        \toprule
         & \multicolumn{3}{c}{Parameters} &  \multicolumn{2}{c}{TAR@FAR} \\
        \cmidrule(lr){2-4} \cmidrule(lr){5-6} 
        Loss & $\alpha$ & $s$ & $m$ & $10^{-3}$ & $10^{-4}$ \\
        \midrule
        ArcFace & - & 32 & 0.2 & 86.62 & 72.08 \\
        CosFace & - & 32 & 0.2 & 86.68 & 70.61 \\
        \midrule
        EntMax & 2.0 & 5 & 0.1 & 84.71 & 74.37 \\
        EntMax & 1.75 & 5 & 0.1 & 87.69 & 74.08 \\
        Q-Margin & 1.75 & 5  & 0.1 & 87.02 & \textbf{74.60}\\
        Q-Margin& 1.5  & 10 & 0.1 & \textbf{87.91} & 72.96 \\
        \bottomrule
    \end{tabular}
\end{table}

In contrast to the face recognition results (which favored $\alpha=1.25$), we observed that higher values
of $\alpha$ ($1.5$ and $1.75$) lead to better-performing speaker verification systems. We hypothesize that
this is related in part to the significantly smaller number of training identities. However, class count
is not the whole story: although the speaker corpus has far fewer identities, modality-specific factors such as embedding-space geometry and intra-class variability also shape the ideal
sparsity level. Such modality-dependent tuning is not unique to Q-Margin since standard margin-penalty losses
exhibit the same gap, with ArcFace's optimum at $s=64, m=0.5$ for face versus $s=32, m=0.2$ for speaker.
Conversely, these results reinforce observations made in \Cref{tab:results-face}, i.e., that the most
consistent improvements are achieved with moderate values of scale ($s\in\{5,10\}$) and margin ($m=0.1$).
Specifically, the same $\alpha$--$s$ interdependence identified for face holds here: higher $\alpha$
requires smaller $s$, which is consistent with the peakiness mechanism of \Cref{sec:results} (\cref{eq:l_grad_theta}).

Both Q-Margin and EntMax yield strong results at the two operating points of interest. We observe a significant improvement from Q-Margin ($\alpha=1.5, s=10, m=0.1$) over both baselines, increasing the TAR to 87.91\% at $10^{-3}$ FAR. At the stricter $10^{-4}$ FAR, the Q-Margin ($\alpha=1.75$, $s=5$, $m=0.1$) formulation achieves the highest overall score (74.60\%).

To visually illustrate these outcomes, we present DET curves in the low-FAR region for selected systems in \Cref{fig:det-curve-speaker}. While the curves for CosFace and ArcFace nearly overlap, the curves for models using our proposed losses are visibly separated, indicating superior performance and robustness.

\section{Conclusions}
Margin-penalty softmax loss functions have driven the success of embedding-based networks in face and speaker verification, while recent advances in generalized Cross-Entropy ($\alpha$-divergence losses) have enabled principled control over posterior sparsity. However, combining these two paradigms is not straightforward: standard geometric margins, designed for the softmax function, do not naturally extend to sparsity-inducing $\alpha$-divergences.

To address this, we propose Q-Margin, a formulation that introduces a probabilistic margin by encoding penalties directly into the non-uniform reference measure $\mathbf{q}$. This approach bridges information-theoretic loss design with margin-based learning, providing a principled alternative to geometric logit manipulation that recovers CosFace as the special case $\alpha \to 1$.

Extensive evaluations on large-scale benchmarks (IJB-B/C, VoxCeleb1-H) demonstrate highly competitive performance, particularly at strict low-FAR operating points critical for high-security applications.  Furthermore, we showed that the inherent sparsity of Q-Margin can be leveraged to optimize the iterative bisection algorithm, yielding exact and highly efficient training throughput even on datasets with millions of identities. Our analysis also highlighted the interplay between $\alpha$, scale ($s$) and margin ($m$), offering practical guidance for tuning these generalized loss functions. We note that most of our WebFace42M experiments were run a single time due to computational cost, so the reported gains, while consistent across benchmarks and modalities, lack confidence intervals. Additionally, Q-Margin introduces one extra hyperparameter ($\alpha$) compared to standard margin-penalty softmax losses, and we observed that $\alpha$ and $s$ interact in ways that require joint tuning. 

\paragraph{Future work.} Natural extensions include dynamically updating the reference measure based on sample difficulty, connecting Q-Margin with adaptive strategies such as AdaFace, and evaluating on additional corpora for fully controlled comparisons.

\paragraph{Ethical considerations.}
Face and speaker verification rely on biometric data that is personal, hard to revoke once compromised, and prone to unequal error rates across demographic groups, and such models can be repurposed for surveillance without consent. Q-Margin's low-FAR, high-security focus can sharpen these concerns, so
deployments should use consensually collected data, audit demographic fairness at the relevant operating points, and follow applicable privacy regulation.

\section*{Acknowledgements}
This work has been partially supported by project MIS 5154714 of the National Recovery and Resilience Plan Greece 2.0 funded
by the European Union under the NextGenerationEU Program.\\
AWS resources were provided by the National Infrastructures for Research and Technology GRNET and funded by the EU Recovery and Resiliency Facility.\\
The work was supported by Ministry of Education, Youth and Sports of the Czech Republic (MoE) through the OP JAK project "Linguistics, Artificial Intelligence and Language and Speech Technologies: from Research to Applications" (ID:CZ.02.01.01/00/23\_020/0008518).

%
%
\bibliographystyle{splncs04}
\bibliography{main}
\end{document}